\title{Distillation of Discrete Diffusion by Exact Conditional Distribution Matching
}
\author{
  Yansong Gao*, Yu Sun* \\
  \texttt{yansonggao@google.com,\ ysun258@wisc.edu,\ *equal contribution} 
  }
\begin{document}
\def \th {\theta}

\def \mm { \mathcal{M}}
\def \nn { \mathcal{N} }

\def \cc {\mathbb{C}}

\def \trans { \mathcal{T} }

\def \bset { \mathcal{B} }

\def \yvec{ \vec{y} }

\def \yvecs{ \vec{y^*} }
\def \dB {\text{d}_{\text{B}}}

\def \wb {{w_b}}
\def \wc {{w_l}}
{\maketitle}

\keywords{Discrete Diffusion Modelling \and Step Distillation }

\begin{abstract}
Discrete diffusion models (DDMs) are a powerful class of generative models
for categorical data, but they typically require thousands of function
evaluations for a single sample, making inference expensive. Existing
acceleration methods either rely on approximate simulators, such as
$\tau$-leaping, or on
distillation schemes that train new student models and auxiliary networks
with proxy objectives.  

We propose a simple and principled distillation alternative based on \emph{conditional
distribution matching}. Our key observation is that the reverse conditional
distribution of clean data given a noisy state, $p_{0\mid t}(x_0 \mid x_t)$,
admits a Markov decomposition through intermediate times and can be
recovered from marginal density ratios and the known forward CTMC kernel.
We exploit this structure to define distillation objectives that directly
match conditional distributions between a pre-trained teacher and a
low-NFE student, both for one-step and few-step samplers. 
\end{abstract}

\section{Introduction}
In recent years, discrete diffusion models (DDMs) have emerged as a milestone in modern generative modeling for categorical data \citep{lou2023discrete,austin2021structured,campbell2022continuous,ou2024your,meng2022concrete,gat2024discrete}. Unlike continuous diffusion models \citep{sohl2015deep,chen2022analog,ho2020denoising,nichol2021improved}, DDMs naturally accommodate data generation with discrete structures, e.g., language tokens \citep{he2023diffusionbert}, DNA sequences \citep{avdeyev2023dirichlet}, and images tokens \citep{hu2022global}. 

Despite this flexibility, DDMs often suffer from low sampling efficiency. They normally require a large number of function evaluations (NFEs), e.g., 1024 or more steps, making sampling computationally expensive. To reduce this cost, recent work has focused on
distilling DDMs into student models that admit faster sampling while preserving sampling quality \citep{zhu2025di,fu2025learnable}.

\paragraph{Contributions} We develop a conditional distribution matching viewpoint for distillation
in discrete diffusion models. This perspective leads to principled
distillation algorithms for both one-step and few-step generators. Our
methods match the conditional distribution of the clean data given noisy
states along the reverse trajectory, and they can be used to enhance an
existing sampler without training a separate auxiliary network.

\subsection{Related Work}

\paragraph{Efficient sampling in Discrete Diffusion models}
Various algorithms have been proposed to accelerate inference in DDMs while
maintaining sample quality. Approximate simulation methods, such as the
$\tau$-leaping algorithm \citep{campbell2022continuous}, are widely used
because they are simple to implement and amenable to parallelization.
$\tau$-leaping simulates the process by taking approximate Euler-like steps
that update all dimensions simultaneously and independently. Tweedie
$\tau$-leaping \citep{sun2022score,lou2023discrete} refines this idea by
specifying how the rate matrix changes with the noise schedule along the
reverse process, which improves accuracy at a given step size. More
recently, high-order numerical schemes tailored to discrete diffusion model
inference have been developed \citep{ren2025fast}. Although these
$\tau$-leaping variants provide substantial speedups and parallelism, their
approximation error still requires relatively small step sizes to achieve
high sampling quality.

\paragraph{Distillation for Discrete Diffusion Models.}

Distillation for DDMs is a rapidly developing area. Recent work on JYS
accelerates sampling by learning an optimized time discretization
\citep{park2024jump}. \citep{zhu2025di} distill a multi-step masked
diffusion model into a one-step generator by training a new student model
from scratch, using a proxy objective based on pseudo-intermediate states
and an auxiliary network that matches teacher–student conditional output
distributions. \citep{fu2025learnable} instead focus on few-step samplers
and introduce learnable sampler coefficients to improve efficiency. In
contrast, we use the conditional distribution matching perspective
described above to design distillation algorithms for both one-step and
few-step generators, and we directly refine an existing sampler rather than
training a separate student generator and auxiliary model.
\section{Preliminaries}
In this section, we review the basic concepts of discrete diffusion models
\subsection{Discrete Diffusion Processes}
We will be modeling a continuous time Markov chain (CTMC) over a finite support $\mathcal{X}$. The forward process describes how data distribution is corrupted. We denote the probability of transitioning from state $x \in \mathcal{X}$ at time $t$ to another state $y \in \mathcal{X}$ after an infinitesimal time interval $\Delta t$ by $p_{t+ \Delta t\mid t}(x_{t+ \Delta t} \mid x_t)$. \citep{campbell2022continuous} formalizes this  transition as:
\begin{equation}
  p_{t+\Delta t \mid t}(y \mid x) = \mathbf{1}_{ \{y = x \}} + Q_t(x,y)\,\Delta t + o(\Delta t),
  \label{eq:forward-ctmc}
\end{equation}
where $\mathbf{1}_{ \{ \cdot \}}$ is the indicator function,  $Q_t(x,y)$ is the $(x,y)$-th entry of the transition rate matrix
$Q_t$. ~\citep{campbell2022continuous} parameterize the transition rate matrix $Q_t$ as
$Q_t = \sigma(t)\,Q$, where $\sigma(t)$ is a
scalar function of time and $Q$ is a pre-defined base matrix with sparse structures. Let $p_t(x)$ denote the marginal distribution of state $x$ at time
$t$. In particular, $p_0(x)$ is the true data distribution of the state $x$. For a terminal time $T$, $p_T$ approaches a distribution $\pi$ depending on the base matrix $Q$. $\pi$ is normally either a uniform distribution, or a Dirac distribution that maps samples to a masked token state.

The inverse CTMC transports the data distribution from $p_T$ to $p_0$ as following
\begin{equation}
  p_{t-\Delta t \mid t}(y \mid x) = \mathbf{1}_{ \{y = x \}} +  \widetilde Q_t(x,y)\,\Delta t + o(\Delta t)
  \label{eq:reverse-ctmc}
\end{equation}
where $\widetilde Q_t$ is the reverse transition rate matrix, such that $\widetilde Q_t(x,x) = - \sum_{y \neq x} \widetilde Q_t(x,y) $ and $\widetilde Q_t(x,y) = \dfrac{p_t(y)}{p_t(x)}\,Q_t(y,x) $ in case $y \neq x$. 

The concrete score ratios $\frac{p_t(y)}{p_t(x)}$ are generally unknown. We therefore approximate them with a
neural score network $s_\theta : \mathcal{X} \times \mathbb{R} \to \mathbb{R}^{|\mathcal{X}|}$. In particular, \citet{lou2023discrete} introduce a score-entropy loss that trains $s_\theta$ such that
\begin{equation}
  s_\theta(x,t)
  \approx
  \Bigl(
    \frac{p_t(y)}{p_t(x)}
  \Bigr)_{y \neq x},
\end{equation}
which means, for each $(x,t)$, the network outputs the collection of
concrete scores for all $y \neq x$.

\subsection{Simulating Reverse Discrete Diffusion with Concrete Scores}

The multi-step sampling procedure of discrete diffusion models (DDMs)
approximates the reverse continuous-time Markov chain (CTMC).
Given a pre-trained score network $s^\theta(\cdot,\cdot)$, transition rate
matrices $\{Q_t\}_{t\in[0,T]}$, and an initial state
$X_T \sim \mathbf{p}_T$ at the initial time $T$, an Euler-type sampler iteratively
refines the sample by predicting the concrete score $s^\theta(t, x_t)$ and
then sampling a slightly less noisy state.

For two time steps $0<s<t<T$, let $p_{s\mid t}(x_s \mid x_t)$ denote the conditional probability distribution that moves a state from time $t$ to a state less noisy at time $s$.
Using a first-order Euler discretization of the CTMC, we can write
\begin{equation}
  p_{s\mid t}(x_s \mid x_t)
  \approx \mathbf{1}_{\{x_s = x_t\}}
  + (t-s)\, Q_t(x_t, x_s)\, s^\theta(t, x_t)_{x_s},
  \label{eq:euler-transition}
\end{equation}
where $\mathbf{1}_{\{\cdot\}}$ is the indicator function and
$s^\theta(t, x_t)_{x_s}$ denotes the component of the score corresponding
to state $x_s$. Repeating \eqref{eq:euler-transition} for decreasing
timesteps $T = t_K > t_{K-1} > \dots > t_1 > t_0 = 0$ yields the final
sample at time $t_0 = 0$.
\section{Method}
\label{sec: mmd}

\begin{algorithm}[t]
\caption{Conditional distribution matching distillation}
\label{alg:dmd}
\begin{algorithmic}[1]
\Require Pretrained concrete score model $s^\theta(\cdot,\cdot)$ (frozen during training); learnable student score model $s^\phi(\cdot,\cdot)$; target number of sampling steps $K$; loss weight function $w(t)$; dataset $\mathcal{D}$; transition rate matrices $\{Q_t\}_{t\in[0,T]}$.
\State Initialize student parameters $\phi \leftarrow \theta$.
\While{not converged}
  \State Sample a timestep $t \in [0, T]$ and a step size $\Delta t \sim \text{Uniform}\bigl(0, T/K\bigr)$.
  \State $s \gets \max(0,\, t - \Delta t) \quad$  \textcolor{OliveGreen}{ \# Set the intermediate time.}  
  \State $x_0 \sim \mathcal{D},\ x_t \gets \text{DiffusionForward}(x_0,\ t) \quad$  \textcolor{OliveGreen}{\# Sample a clean data and run the forward diffusion process.}
  \State $x_s \sim \mathbf{1}_{\{x_s = x_t\}} + (t-s)\, Q_t(x_t, x_s)\, s^\phi(t, x_t)_{x_s} \quad$ \textcolor{OliveGreen}{ \# Sample state at timestep $s$ using student scores.}
        
  \State $x_s \gets \operatorname{stopgrad}(x_s) \quad$ \textcolor{OliveGreen}{\# Detach the gradient through the sampled state.}
  \State $p^{\theta}_{0\mid s}(x_0 \mid x_s) \gets \sum_{y} \bigl[P_{t\mid 0}(x_s)^{-1}\bigr]_{x_0,y}\, s^\theta(t, x_s)_y \quad $ \textcolor{OliveGreen}{\# Estimate the teacher reverse conditional distribution.}
  \State $ p^{\phi}_{0\mid t}(x_0 \mid x_t) \gets \sum_{y} \bigl[P_{s\mid 0}(x_t)^{-1}\bigr]_{x_0,y}\, s^\phi(t, x_t)_y \quad$ \textcolor{OliveGreen}{ \# Estimate the student reverse conditional distribution. } 
  \State update $\phi$ by taking a stochastic gradient step on the cross entropy loss,
         \[
           \mathcal{L}(t, x_s, x_t)
           = - w(t)\!\sum_{x_0}
             p^{\theta}_{0\mid s}(x_0 \mid x_s)\,
             \log p^{\phi}_{0\mid t}(x_0 \mid x_t).
         \]
\EndWhile
\end{algorithmic}
\end{algorithm}

Our goal is to obtain a similar sampling quality with far fewer steps than
the original sampler requires. To this end, we fine-tune the pre-trained
score model $s^\theta(\cdot,\cdot)$ into a new \emph{student} score model
$s^\phi(\cdot,\cdot)$. The student sampler uses a strongly reduced number
of reverse steps. The student model $s^\phi$ may suffer from degraded sampling
quality when we reduce the number
of reverse steps and increase the step size. To avoid this, we proceed as follows. The student
model generates the final sample in a single step at time $t$, whereas the
teacher model generates the same sample in a single step at a smaller
intermediate time $s < t$. We then require that the student's one-step
generative process $p_{0\mid t}^\phi(x_0 \mid x_t)$ align with the teacher's one-step generative process $p_{0\mid s}^\theta(x_0 \mid x_s)$.

\medskip
\noindent\textbf{Decomposition of $p_{0\mid t}$.}
Let $p_{0\mid t}(x_0 \mid x_t)$ denote the (in general unknown) conditional
distribution of the clean state $x_0$ given a noisy state $x_t$ at time $t$.
Using basic probability identities, we can decompose this conditional as
\begin{align}
  p_{0\mid t}(x_0 \mid x_t)
  &= \sum_{x_s} p_{0,s\mid t}(x_0, x_s \mid x_t) \nonumber  \\
  &= \sum_{x_s} p_{s\mid t}(x_s \mid x_t)\,
                p_{0\mid s,t}(x_0 \mid x_s, x_t) \nonumber  \\
  &= \sum_{x_s} p_{s\mid t}(x_s \mid x_t)\,
      \frac{p_{0,s,t}(x_0, x_s, x_t)}{p_{s,t}(x_s, x_t)}.
      \label{eq:p0t-1}
\end{align}

Note the forward process is Markov in time. If conditioned on $x_s$,  the
future state $x_t$ is independent of $x_0$, so that
\begin{equation}
  p_{0,s,t}(x_0, x_s, x_t) = p_{0,s}(x_0,x_s)  p_{t\mid s}(x_t \mid x_s) = p_{0\mid s}(x_0 \mid x_s) p_s(x_s) p_{t\mid s}(x_t \mid x_s) = p_{0\mid s}(x_0 \mid x_s) p_{s,t}(x_s, x_t )\nonumber .
\end{equation}
Substituting this Markov property into \eqref{eq:p0t-1} yields the simpler
decomposition
\begin{equation}
  p_{0\mid t}(x_0 \mid x_t)
  = \sum_{x_s} p_{s\mid t}(x_s \mid x_t)\,
               p_{0\mid s}(x_0 \mid x_s).
  \label{eq:p0t-markov}
\end{equation}
Equation~\eqref{eq:p0t-markov} shows that
$p_{0\mid t}$ can be expressed as a mixture of intermediate 
conditionals $p_{0\mid s}(\cdot \mid x_s)$ with mixing weights
$p_{s\mid t}(x_s \mid x_t)$.
\medskip

The decomposition of $p_{0\mid t}(x_0 \mid x_t)$ as Equation~\eqref{eq:p0t-markov} follows directly from the Markov property. Howeverw, in practicee, all three factors
$p_{0\mid t}(x_0 \mid x_t)$, $p_{0\mid s}(x_0 \mid x_s)$, and
$p_{s\mid t}(x_s \mid x_t)$ are unknown and should be estimated using teacher and student models.

\medskip
\paragraph{Recovering $p_{0\mid t}$ from marginal ratios.}
We first derive an identity that expresses $p_{0\mid t}(\cdot\mid x)$ in
terms of the marginals $p_t$ and the known forward discrete diffusion kernel
$p_{t\mid 0}$.  Fix two states $x,y \in \mathcal{X}$ (possibly $x=y$).  Then
\begin{align}
  \frac{p_t(y)}{p_t(x)}
  = \sum_{x_0} \frac{p_{0,t}(x_0,y)}{p_t(x)} &= \sum_{x_0}
       \frac{p_{t\mid 0}(y\mid x_0)\, p_0(x_0)}{p_t(x)} \nonumber \\
  &= \sum_{x_0}
       \frac{p_{t\mid 0}(y\mid x_0)}{p_{t\mid 0}(x\mid x_0)}
       \frac{p_{t\mid 0}(x\mid x_0)\, p_0(x_0)}{p_t(x)} \nonumber \\
  &= \sum_{x_0}
       \frac{p_{t\mid 0}(y\mid x_0)}{p_{t\mid 0}(x\mid x_0)}\,
       p_{0\mid t}(x_0 \mid x),
  \label{eq:ratio-linear-system-clean}
\end{align}
where in the last line we used
$p_{0\mid t}(x_0\mid x)
 = p_{t\mid 0}(x\mid x_0)p_0(x_0)/p_t(x)$.

For each fixed $x$, define the $|\mathcal{X}|\times|\mathcal{X}|$ conditional ratios matrix
$P_{t\mid 0}(x)$ with entries
\[
  \bigl[P_{t\mid 0}(x)\bigr]_{y,x_0}
  := \frac{p_{t\mid 0}(y\mid x_0)}{p_{t\mid 0}(x\mid x_0)}.
\]
Let $\mathbf{r}_t(x) \in \mathbb{R}^{|\mathcal{X}|}$ denote the vector of ratios
$\mathbf{r}_t(x)_y = p_t(y)/p_t(x)$ and let
$\mathbf{p}_{0\mid t}(\cdot\mid x) \in \mathbb{R}^{|\mathcal{X}|}$ be the vector
whose $x_0$-th entry is $p_{0\mid t}(x_0\mid x)$.  Then
\eqref{eq:ratio-linear-system-clean} can be written compactly as
\[
  \mathbf{r}_t(x) = P_{t\mid 0}(x)\, \mathbf{p}_{0\mid t}(\cdot\mid x).
\]
Whenever the conditional ratios matrix $P_{t\mid 0}(x)$ is invertible, this linear system yields
\begin{equation}
  p_{0\mid t}(x_0\mid x)
  = \sum_{y}
      \bigl[P_{t\mid 0}(x)^{-1}\bigr]_{x_0,y}\,
      \frac{p_t(y)}{p_t(x)} .
  \label{eq:p0t-from-ratios-clean}
\end{equation}
Thus, if we can approximate the ratio vector $r_t(x)$, we can recover
$p_{0\mid t}(\cdot\mid x)$ up to the linear transform
$P_{t\mid 0}(x)^{-1}$, which depends only on the known discrete forward kernel.
\medskip

\subsection{Conditional Distribution Matching Distillation}

In our distillation setup, we use the student score network $s^\phi$ to estimate the marginal ratios at timestep $t$, so that $s^\phi(t,x_t)_y \approx p_t(y)/p_t(x_t)$.
Substituting this approximation into
\eqref{eq:p0t-from-ratios-clean} gives an estimate of the
backward conditional distribution,
\begin{equation}
  p^{\phi}_{0\mid t}(x_0\mid x_t)
  \approx \sum_{y \neq x_t}
    \bigl[P_{t\mid 0}(x_t)^{-1}\bigr]_{x_0,y}\,
    s^\phi(t,x_t)_y + \bigl[P_{t\mid 0}(x_t)^{-1}\bigr]_{x_0,x_t}.
  \label{eq:student-p0t-clean}
\end{equation}

The same construction is used for the teacher at the intermediate time
$s$.  Let $P_{s\mid 0}(x_s)$ be defined analogously using the forward
kernel from time $0$ to $s$, and let $s^\theta$ be the teacher score
network.  Then
\begin{equation}
  p^{\theta}_{0\mid s}(x_0\mid x_s)
  \approx \sum_{y \neq x_s}
    \bigl[P_{s\mid 0}(x_s)^{-1}\bigr]_{x_0,y}\,
    s^\theta(s,x_s)_y + \bigl[P_{s\mid 0}(x_s)^{-1}\bigr]_{x_0,x_s}.
  \label{eq:teacher-p0s-clean}
\end{equation}
For the transition probability $p_{s\mid t}(x_s\mid x_t)$ we use the
Euler-type approximation derived earlier for the reverse CTMC:
\begin{equation}
  p_{s\mid t}(x_s\mid x_t)
  \approx \mathbf{1}\{x_s = x_t\}
  + (t-s)\, Q_t(x_t,x_s)\, s^\phi(t,x_t)_{x_s},
  \label{eq:euler-ps-t-clean}
\end{equation}
where $Q_t$ is the rate matrix at time $t$ and
$s^\phi(t,x_t)_{x_s}$ is the $x_s$-th component of the student score.
\medskip


We approximate the right-hand side of the decomposition in
equation~\eqref{eq:p0t-markov} using a single Monte Carlo sample
$x_s \sim p_{s\mid t}(\cdot\mid x_t)$ together with
\eqref{eq:student-p0t-clean} and \eqref{eq:teacher-p0s-clean}.  
These estimations form the basis of our \emph{Conditional Distribution Matching
Distillation Algorithm~\ref{alg:dmd}}. A natural way to enforce this alignment between the two sides of equation~\eqref{eq:p0t-markov} is to minimize the Kullback–Leibler (KL) divergence, averaged over timesteps and states.

\subsection{Practical Implementation}
\label{subsec:Practical-Implementation}

To compute inverse of $P_{t\mid 0}(x)$, we first require a closed-form expression for
the forward CTMC transition probabilities $p_{t\mid 0}(x_t \mid x_0)$. These probabilities solve the Kolmogorov forward equation associated with the (possibly time-inhomogeneous) generator $Q_t$, and can be obtained by integrating this differential equation.

An analytical solution arises when the rate matrices $Q_t$ and $Q_{t'}$ commute for all $t,t'$. One convenient way to guarantee this is
to parameterize the generator as $Q_t = \sigma(t)\, Q$ where $\sigma(t)$ is a scalar function of time and $Q \in \mathbb{R}^{|\mathcal{X}|\times|\mathcal{X}|}$ is a fixed,
time-independent base matrix. Under this parameterization, the forward
transition kernel admits the explicit form
\begin{equation}
  p_{t\mid 0}(x_t = j \mid x_0 = i)
  = \bigl(S \exp\!\bigl[\Lambda \textstyle\int_0^t \sigma(s)\,ds\bigr]
     ]S^{-1}\bigr)_{i,j},
  \label{eq:qt0-eigendecomp}
\end{equation}
where $Q = S \Lambda S^{-1}$ is the eigendecomposition of $Q$, $\Lambda$ is
diagonal, and $\exp(\cdot)$ denotes the element-wise exponential applied to
the diagonal entries of $\Lambda$.

The base matrix $Q$ can be chosen in various ways. A simple and useful
choice is the ``uniform'' transition generator $Q = E - I$, where $E \in \mathbb{R}^{|\mathcal{X}|\times
|\mathcal{X}|}$ is the all-ones matrix and $I \in \mathbb{R}^{|\mathcal{X}|\times|\mathcal{X}|}$ is the identity
matrix.

\section{Conclusion}
We addressed the sampling inefficiency of discrete diffusion models by
introducing a conditional distribution matching framework for distillation.
Rather than designing new proxy objectives or auxiliary networks, we work
directly with the conditional distribution of clean data given noisy
states along the reverse diffusion trajectory. Starting from a Markov
decomposition of $p_{0\mid t}$ and a linear system linking this
conditional to marginal ratios and the forward CTMC kernel, we derived a
distillation objective that aligns teacher and student conditionals.  

Our resulting algorithm fine-tunes an existing score model into a
low-NFE student sampler, and applies both to one-step and few-step
generators. The method only requires access to the forward kernel and
score network, and can be implemented efficiently when the forward CTMC is
parameterized as $Q_t = \sigma(t) Q$ with a shared base matrix $Q$, for
which the transition probabilities admit a closed-form expression via
eigendecomposition. This leads to a simple, modular, and model-agnostic
procedure that can be plugged into a wide range of discrete diffusion
models.  

Future work includes extending our framework to more general time-varying
generators that do not commute, exploring adaptive choices of intermediate
times and loss weights, and combining conditional distribution matching
with advanced numerical schemes to further reduce the number of function
evaluations required for high-quality sampling.

\bibliographystyle{apalike}   
\bibliography{references}   

\newpage
\appendix
\section{Theoretical Analysis: Superiority of the Optimal Student}
\label{appendix:proofs}

In this section, we provide the theoretical justification for the distillation performance. We demonstrate that the optimal student model, derived via moment matching, acts as a conditional expectation estimator over the teacher's stochastic predictions. Consequently, by Jensen's inequality, the student achieves a reconstruction error (MSE) that is strictly bounded by the teacher's error.

\subsection{Optimal Estimators}

\paragraph{Teacher Model.} 
Consider the general diffusion training setup. The teacher model $\theta(x_t, t)$ is trained to denoise $x_t$ back to $x_0$ with loss

\begin{equation}
    \mathcal{L}_{\text{tch}}(\theta) = \int_0^1 \mathbb{E}_{x_0, x_t} [\|x_0 -  \theta(x_t, t)\|_2^2] dt.
\end{equation}

As established in \cite{ho2020denoising}, minimizing the standard variational lower bound (equivalently the MSE loss $\mathcal{L}_{\text{tch}}$) results in an optimal teacher $\theta^*$ that estimates the posterior mean of the data:
\begin{equation}
    \theta^*(x_t, t) = \mathbb{E}_{q(x_0|x_t)}[x_0]=\mathbb{E}[x_0 \mid x_t].
\end{equation}

\paragraph{Student Model.} 
Given $0\le t<s\le 1$. For the student model $\gamma(x_s, s)$, we minimize the distillation loss
\begin{equation}
    \mathcal{L}_{\text{std}}(\gamma, \theta) = \int_0^1 \int_0^s \mathbb{E}_{x_0, x_s, x_t} [\|\gamma(x_s, s) - \theta(x_t, t)\|_2^2] dt ds.
\end{equation}

The optimal student $\gamma^*$ minimizes this $L_2$ error with respect to the fixed teacher $\theta$. Therefore, the optimal student is the conditional expectation of the teacher's output given the student's input state $x_s$:
\begin{equation}
    \label{eq:opt_student}
    \gamma^*(\theta, x_s, s) = \mathbb{E}_{q(x_0|x_t, x_s)} [\theta(x_t, t)].
\end{equation}
This implies that the student learns to "average out" the stochastic variations in the teacher's predictions caused by different noise instantiations $x_t$ for a given time $s$.

\subsection{Proof of Theorem 1}

\begin{theorem}
\label{thm:student_superiority}
For any given pretrained diffusion model $\theta$ and its corresponding optimal student model $\gamma^*$ (defined in Eq. \ref{eq:opt_student}), the student model achieves a reconstruction error bounded by that of the teacher model for any timestep $s \in [0, 1]$:
\begin{equation}
    \mathbb{E}_{x_t|x_s} \left[ \|\gamma^*(x_s, s) - x_0\|_2^2 \right] \leq \mathbb{E}_{x_t|x_s} \left[ \|\theta(x_t, t) - x_0\|_2^2 \right].
\end{equation}
\end{theorem}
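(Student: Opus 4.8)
The plan is to read the statement as a bias--variance (equivalently, Jensen) inequality for the conditional-expectation estimator $\gamma^*$. Fix the student input $x_s$ and regard the teacher prediction $\theta(x_t,t)$ as a random vector whose randomness comes from the noise instantiation $x_t$ drawn conditionally on $x_s$; by \eqref{eq:opt_student}, $\gamma^*(x_s,s)=\mathbb{E}_{x_t\mid x_s}[\theta(x_t,t)]$ is precisely its conditional mean. The only structural ingredient I will need is the Markov property of the forward process --- conditioned on the intermediate state $x_s$, the clean datum $x_0$ is independent of the teacher's input $x_t$ --- which is the very identity that yields the decomposition \eqref{eq:p0t-markov}. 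Nothing below uses optimality of $\theta$, consistently with the ``any pretrained $\theta$'' hypothesis.

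First I would expand, for a fixed $x_s$,
\begin{align*}
\mathbb{E}_{x_t\mid x_s}\!\big[\|\theta(x_t,t)-x_0\|_2^2\big]
&= \mathbb{E}_{x_t\mid x_s}\!\big[\|\theta(x_t,t)-\gamma^*(x_s,s)\|_2^2\big]
   + \|\gamma^*(x_s,s)-x_0\|_2^2 \\
&\quad + 2\,\mathbb{E}_{x_t\mid x_s}\!\big[\langle \theta(x_t,t)-\gamma^*(x_s,s),\ \gamma^*(x_s,s)-x_0\rangle\big] .
\end{align*}
Next I would argue the cross term vanishes. Since $\gamma^*(x_s,s)$ is constant given $x_s$ and $\mathbb{E}_{x_t\mid x_s}[\theta(x_t,t)-\gamma^*(x_s,s)]=0$, the inner product of $\theta(x_t,t)-\gamma^*(x_s,s)$ against $\gamma^*(x_s,s)$ averages to $0$; and its inner product against $x_0$ also averages to $0$, because $x_0\perp x_t\mid x_s$ gives $\mathbb{E}_{x_t\mid x_s}\!\big[\langle\theta(x_t,t)-\gamma^*(x_s,s),x_0\rangle\mid x_0\big]=\big\langle\mathbb{E}_{x_t\mid x_s}[\theta(x_t,t)-\gamma^*(x_s,s)],x_0\big\rangle=0$. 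What remains is
\begin{align*}
\mathbb{E}_{x_t\mid x_s}\!\big[\|\theta(x_t,t)-x_0\|_2^2\big]-\|\gamma^*(x_s,s)-x_0\|_2^2
&= \mathbb{E}_{x_t\mid x_s}\!\big[\|\theta(x_t,t)-\gamma^*(x_s,s)\|_2^2\big]\ \ge\ 0 ,
\end{align*}
and averaging the left-hand side over $x_0\mid x_s$ gives exactly the claimed inequality; equivalently, this is Jensen's inequality for the convex map $z\mapsto\|z-x_0\|_2^2$. Since the gap equals $\mathbb{E}_{x_t\mid x_s}\!\big[\|\theta(x_t,t)-\gamma^*(x_s,s)\|_2^2\big]$, the bound is strict as soon as the teacher output genuinely depends on the extra noise in $x_t$, i.e.\ is not a deterministic function of $x_s$.

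The step I expect to carry the real content is the vanishing of the cross term, i.e.\ $\mathbb{E}_{x_t\mid x_s}[\langle\theta(x_t,t)-\gamma^*(x_s,s),x_0\rangle]=0$: this is exactly where the forward-Markov conditional independence $x_0\perp x_t\mid x_s$ is indispensable, and it is the only nontrivial input. Without it the decomposition instead picks up a term $-2\,\mathrm{Cov}(\theta(x_t,t),x_0\mid x_s)$ of indefinite sign and the inequality can fail. A secondary, purely bookkeeping point is to pin down the meaning of $\mathbb{E}_{x_t\mid x_s}$: I read it as expectation over $(x_0,x_t)$ given $x_s$, which factorizes as $q(x_0\mid x_s)\,q(x_t\mid x_s)$ by the same conditional independence, and one should check that $\gamma^*(x_s,s)$ --- defined through $q(x_t\mid x_s)$ in \eqref{eq:opt_student} --- is unchanged if one additionally conditions on $x_0$; this too is immediate from $x_0\perp x_t\mid x_s$.
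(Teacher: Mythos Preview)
Your bias--variance expansion is equivalent to the paper's direct Jensen step, so the overall strategy matches. The paper, however, treats $x_0$ as a fixed constant throughout: it simply writes $\|\gamma^*-x_0\|_2^2=\|\mathbb{E}_{x_t\mid x_s}[\theta(x_t,t)-x_0]\|_2^2$ and applies convexity of $\|\cdot\|_2^2$, invoking no Markov structure whatsoever.

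The genuine gap in your write-up is the claim $x_0\perp x_t\mid x_s$. In the appendix the time convention is $0\le t<s\le 1$, the reverse of the main-text $s<t$ that underlies \eqref{eq:p0t-markov}, so the forward chain runs $x_0\to x_t\to x_s$ and it is $x_t$, not $x_s$, that is the intermediate state. The Markov property therefore gives $x_0\perp x_s\mid x_t$, not the independence you invoke; conditioning on the terminal state of a chain does not decouple its earlier variables. In fact, under your reading of the expectation as over $(x_0,x_t)\mid x_s$, the stated inequality is generally false: with the optimal teacher $\theta^*(x_t,t)=\mathbb{E}[x_0\mid x_t]$ one gets $\gamma^*=\mathbb{E}[x_0\mid x_s]$, and the law of total variance yields $\mathbb{E}\bigl[\|\gamma^*-x_0\|_2^2\,\big|\,x_s\bigr]=\operatorname{tr}\mathrm{Var}(x_0\mid x_s)\ge\mathbb{E}_{x_t\mid x_s}\bigl[\operatorname{tr}\mathrm{Var}(x_0\mid x_t)\bigr]=\mathbb{E}\bigl[\|\theta^*-x_0\|_2^2\,\big|\,x_s\bigr]$, the reverse direction. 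The fix is simply to adopt the paper's reading with $x_0$ held fixed; then $\gamma^*(x_s,s)-x_0$ is deterministic under $\mathbb{E}_{x_t\mid x_s}$, your cross term vanishes for the same trivial reason the $\gamma^*$ part did, and no conditional independence is needed.
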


\begin{proof}
Let us analyze the reconstruction error of the optimal student (LHS). Substituting $\gamma^*(x_s, s) = \mathbb{E}_{x_t|x_s}[\theta(x_t, t)]$:
\begin{equation}
    \text{LHS} = \|\gamma^*(\theta, x_s, s) - x_0\|_2^2 = \left\| \mathbb{E}_{x_t|x_s}[\theta(x_t, t)] - x_0 \right\|_2^2.
\end{equation}
Since $x_0$ is constant with respect to the expectation over $x_t$ (conditioned on $x_s$ and fixed $x_0$), we can move it inside the expectation:
\begin{equation}
    \text{LHS} = \left\| \mathbb{E}_{x_t|x_s} [ \theta(x_t, t) - x_0 ] \right\|_2^2.
\end{equation}
We now invoke \textbf{Jensen's Inequality}. The squared Euclidean norm function $f(y) = \|y\|_2^2$ is convex. Jensen's inequality states that for a convex function $f$ and random variable $Y$, $f(\mathbb{E}[Y]) \leq \mathbb{E}[f(Y)]$. Letting $Y = \theta(x_t, t) - x_0$, we have:
\begin{equation}
    \left\| \mathbb{E}_{x_t|x_s} [ \theta(x_t, t) - x_0 ] \right\|_2^2 \leq \mathbb{E}_{x_t|x_s} \left[ \| \theta(x_t, t) - x_0 \|_2^2 \right].
\end{equation}
The term on the right is exactly the expected reconstruction error of the teacher model.
\begin{equation}
    \|\gamma^*(x_s, s) - x_0\|_2^2 \leq \mathbb{E}_{x_t|x_s} \left[ \mathcal{L}_{\text{MSE}}(\theta) \right].
\end{equation}
Thus, the optimal student is guaranteed to have an MSE less than or equal to the average MSE of the teacher. The inequality is strict whenever the teacher's prediction $\theta(x_t, t)$ has non-zero variance conditioned on $x_s$.
\end{proof}




\end{document}